\title[Approximate Fiber Product]{Approximate Fiber Product: A Preliminary Algebraic-Geometric Perspective on Multimodal Embedding Alignment}
\begin{document}

\maketitle

\begin{abstract}
Multimodal tasks, such as image-text retrieval and generation, require embedding data from diverse modalities into a shared representation space. However, aligning embeddings from heterogeneous sources while preserving both shared and modality-specific information remains a fundamental challenge. This work represents an initial attempt to bridge algebraic geometry and multimodal representation learning, offering a foundational perspective for further exploration. Specifically, this paper presents a theoretical framework for multimodal alignment, grounded in algebraic geometry and polynomial ring representations.

We represent image and text data as polynomials over discrete rings, \( \mathbb{Z}_{256}[x] \) and \( \mathbb{Z}_{|V|}[x] \), respectively. These representations enable the application of algebraic tools, such as fiber products, to study alignment properties. To address real-world variability, we extend the classical fiber product definition to an approximate fiber product, introducing a tolerance parameter \( \epsilon \) that balances alignment precision and noise tolerance. We analyze the dependence of the approximate fiber product on \( \epsilon \), deriving its asymptotic behavior, robustness under perturbations, and sensitivity to the dimensionality of the embedding space.

Furthermore, we hypothesize a decomposition of the shared embedding space into orthogonal subspaces: \( Z = Z_s \oplus Z_I \oplus Z_T \), where \( Z_s \) captures shared semantics, and \( Z_I \) and \( Z_T \) encode modality-specific features. This decomposition is interpreted geometrically using manifold and fiber bundle perspectives, offering insights into the structure and optimization of multimodal embeddings. 

Our results provide a principled foundation for analyzing multimodal alignment, revealing new connections between embedding robustness, dimensionality allocation, and algebraic structure. This work lays the groundwork for future explorations of embedding spaces in multimodal learning through the lens of algebraic geometry.
\end{abstract}

\begin{keywords}%
  Multimodal Alignment, Learning Theory, Algebraic Geometry%
\end{keywords}

\section{Introduction}

Multimodal tasks, such as image-text retrieval, captioning, and multimodal conversational systems, require embedding data from heterogeneous modalities into a unified representation space. This shared embedding space facilitates comparisons and interactions across modalities but presents unique theoretical challenges due to the inherent differences between modalities. Images capture detailed visual structures such as spatial layouts and textures, while text encodes abstract linguistic meanings. Aligning these fundamentally different modalities in a mathematically rigorous and interpretable way remains an open problem.

Current multimodal models, such as CLIP, achieve alignment by optimizing contrastive objectives over paired datasets, but these methods often lack a formal theoretical framework to model the alignment and disentanglement of shared and modality-specific features. As a result, understanding the geometry, robustness, and scalability of such models is challenging. To address this gap, we propose a novel algebraic-geometric framework for analyzing and designing multimodal embedding spaces.

\paragraph{Polynomial Ring Representations of Modalities}
We begin by representing image and text data within the algebraic structure of polynomial rings. Images are encoded as polynomials over \( \mathbb{Z}_{256}[x] \), where pixel intensities in image patches serve as coefficients. Text sequences are similarly represented as polynomials over \( \mathbb{Z}_{|V|}[x] \), with token indices as coefficients. This abstraction not only unifies the representation of modalities but also enables the application of algebraic geometry tools, such as fiber products, for analyzing the relationships between embeddings.

\paragraph{Approximate Fiber Products for Alignment}
The core of our framework is the notion of an approximate fiber product. Given mappings \(f: \mathbb{Z}_{256}[x] \to \mathbb{R}[x]\) and \(g: \mathbb{Z}_{|V|}[x] \to \mathbb{R}[x]\) embedding images and text into a shared space \(Z \subset \mathbb{R}[x]\), the approximate fiber product:
\[
\mathbb{Z}_{256}[x] \times_{Z, \epsilon} \mathbb{Z}_{|V|}[x] = \{(P, Q) \mid \|f(P) - g(Q)\| \leq \epsilon\},
\]
captures pairs of image and text embeddings aligned within a tolerance \(\epsilon > 0\). This construction generalizes the classical fiber product from algebraic geometry to embedding spaces, bridging the gap between theoretical rigor and practical variability in alignment.

We investigate the properties of the approximate fiber product, including its dependence on the embedding distributions and its sensitivity to the parameter \(\epsilon\). For instance, we derive asymptotic growth rates that highlight how the size of the fiber product scales with the dimensionality of the embedding space and the overlap between modality distributions. We also prove robustness bounds under noise, ensuring that the approximate fiber product remains stable in real-world scenarios.

\paragraph{Decomposition of the Embedding Space}
In addition to studying the alignment properties of embeddings, we hypothesize that the shared embedding space \(Z\) decomposes into three orthogonal subspaces:
\[
Z = Z_s \oplus Z_I \oplus Z_T,
\]
where \(Z_s\) is the shared semantic subspace capturing common information, and \(Z_I\) and \(Z_T\) are modality-specific subspaces encoding unique features of images and text, respectively. This decomposition allows a principled separation of shared and modality-specific information, facilitating interpretability and robust alignment.

We provide a geometric interpretation of this decomposition using concepts from algebraic geometry. The shared subspace \(Z_s\) is modeled as a low-dimensional manifold, capturing the semantic "intersection" of the two modalities, while the modality-specific subspaces form orthogonal complements. We further introduce a fiber bundle perspective, viewing the embedding space as a product of the shared subspace and modality-specific fibers. These interpretations guide the design of embedding models and optimization objectives.

\paragraph{Contributions}
This paper develops a rigorous theoretical framework for multimodal embeddings by combining algebraic geometry and machine learning. Our contributions include:
\begin{itemize}
    \item A unified representation of image and text modalities as polynomials over discrete rings, enabling algebraic manipulation and analysis.
    \item The introduction of approximate fiber products to model multimodal alignment, along with theoretical results on their properties, including robustness, scalability, and asymptotic behavior.
    \item A decomposition of the embedding space into shared and modality-specific subspaces, supported by geometric interpretations and optimization strategies.
    \item New insights into the interplay between dimensionality, alignment precision, and embedding robustness, providing a foundation for designing scalable multimodal models.
\end{itemize}

By grounding multimodal embeddings in algebraic geometry, we aim to bridge the gap between theoretical rigor and practical applicability, offering new tools for analyzing and improving multimodal models. This work opens pathways for future research on the algebraic structure of embedding spaces and its implications for multimodal learning.

\section{Approximate Fiber Product}

\subsection{Ring Representations}

In our framework, both image and text data are represented within the algebraic structure of polynomial rings, providing a unified perspective for analyzing multimodal embeddings. Specifically:

\paragraph{Image Representation as Polynomials in \( \mathbb{Z}_{256}[x] \):} 
Each image is divided into patches, where each patch consists of discrete pixel intensity values in the range \([0, 255]\). By flattening the pixel values of a patch into a vector \((a_0, a_1, \dots, a_n)\), we construct the corresponding polynomial:
\[
P(x) = a_0 + a_1x + a_2x^2 + \cdots + a_nx^n, \quad a_i \in \mathbb{Z}_{256}.
\]
This representation allows the image patches to be viewed as elements in the polynomial ring \( \mathbb{Z}_{256}[x] \), enabling algebraic manipulation and analysis.

\paragraph{Text Representation as Polynomials in \( \mathbb{Z}_{|V|}[x] \):}
For text, the input is tokenized into a sequence of token IDs \((t_0, t_1, \dots, t_m)\), where each token \( t_i \) is an integer in the range \([0, |V|-1]\), and \( |V| \) is the vocabulary size. The corresponding polynomial representation is:
\[
Q(x) = t_0 + t_1x + t_2x^2 + \cdots + t_mx^m, \quad t_i \in \mathbb{Z}_{|V|}.
\]
This representation embeds the discrete token sequences into the polynomial ring \( \mathbb{Z}_{|V|}[x] \), capturing their inherent sequential structure.

\paragraph{Unifying Multimodal Representations:}
By representing images and text as polynomials in their respective rings \( \mathbb{Z}_{256}[x] \) and \( \mathbb{Z}_{|V|}[x] \), we provide a common algebraic framework for multimodal data. These polynomial representations serve as a foundation for introducing algebraic geometry tools, such as fiber products and moduli spaces, to study the alignment and structure of multimodal embeddings.

\subsection{Extended Definitions of Fiber Product}

In algebraic geometry, the fiber product is a construction used to describe the pullback of two morphisms. Specifically, given two morphisms \( f: I \to Z \) and \( g: T \to Z \), where \( I \), \( T \), and \( Z \) are schemes (or affine varieties defined over polynomial rings), the fiber product is defined as:
\[
I \times_Z T = \{(i, t) \in I \times T \mid f(i) = g(t)\}.
\]
It represents the ``pullback'' of the morphisms \( f \) and \( g \) to the shared space \( Z \), capturing the relationships between \( I \) and \( T \) over \( Z \).

In the context of multi-modal embedding alignment, \( f \) and \( g \) can be viewed as embeddings mapping image and text data into the shared semantic space \( Z \). Since exact alignment \( f(i) = g(t) \) is often infeasible in practical settings due to noise, model approximation, or inherent variability in data, we generalize this definition to an approximate fiber product:
\[
I \times_{Z, \epsilon} T = \{(i, t) \in I \times T \mid \|f(i) - g(t)\| \leq \epsilon\}.
\]
Here, \( \epsilon > 0 \) introduces a tolerance for alignment, and \( \|\cdot\| \) represents a distance metric (e.g., Euclidean norm) in the embedding space \( Z \).

A commutative diagram can illustrate the approximate fiber product as follows
\[
\begin{tikzcd}
    \mathbb{Z}_{256}[x] \times_{Z, \epsilon} \mathbb{Z}_{|V|}[x] 
    \arrow[d, "\pi_1"] 
    \arrow[r, "\pi_2"] 
    & \mathbb{Z}_{|V|}[x] \arrow[d, "g"] \\
    \mathbb{Z}_{256}[x] \arrow[r, "f"] & \mathbb{R}[x]
\end{tikzcd}
\]
\noindent where
\begin{itemize}
    \item \( \mathbb{Z}_{256}[x] \): Polynomials representing image patches, coefficients in \( \mathbb{Z}_{256} \) (pixel values).
    \item \( \mathbb{Z}_{|V|}[x] \): Polynomials representing text tokens, coefficients in \( \mathbb{Z}_{|V|} \) (vocabulary indices).
    \item \( \mathbb{R}[x] \): The shared real polynomial space for multimodal embeddings.
    \item \( f: \mathbb{Z}_{256}[x] \to \mathbb{R}[x] \): The morphism mapping image polynomials to the shared space.
    \item \( g: \mathbb{Z}_{|V|}[x] \to \mathbb{R}[x] \): The morphism mapping text polynomials to the shared space.
    \item \( \mathbb{Z}_{256}[x] \times_{Z, \epsilon} \mathbb{Z}_{|V|}[x] \): The approximate fiber product, representing pairs \((P(x), Q(x))\) such that \(\|f(P(x)) - g(Q(x))\| \leq \epsilon\).
    \item \( \pi_1 \): Projection to the first component \( \mathbb{Z}_{256}[x] \).
    \item \( \pi_2 \): Projection to the second component \( \mathbb{Z}_{|V|}[x] \).
\end{itemize}

\subsection{Influence of \(\epsilon\)}

The parameter \(\epsilon > 0\) plays a critical role in the approximate fiber product, determining the allowable deviation between embeddings from the two modalities. By formalizing the relationship between \(\epsilon\) and the size of the fiber product, we derive deeper insights into its mathematical and practical properties.

\paragraph{Dependence on Data Distributions}
The size of the approximate fiber product is given by:
\[
|X \times_{Z, \epsilon} Y| = \int_Z \mu_f(z) \int_{B_\epsilon(z)} \mu_g(z') \, dz' \, dz,
\]
where \(B_\epsilon(z) = \{z' \in Z \mid \|z - z'\| \leq \epsilon\}\) represents an \(\epsilon\)-neighborhood around \(z\). This relationship reveals that \(|X \times_{Z, \epsilon} Y|\) depends on the overlap of \(\mu_f\) and \(\mu_g\). For high-density overlap regions, the growth of \(|X \times_{Z, \epsilon} Y|\) with \(\epsilon\) is rapid, while minimal overlap results in slower growth.

\paragraph{Asymptotic Behavior in High Dimensions}
When \(\mu_f\) and \(\mu_g\) are Gaussian distributions in \(d\)-dimensional space, the size of the approximate fiber product asymptotically scales as:
\[
|X \times_{Z, \epsilon} Y| \propto \epsilon^d \cdot \exp\left(-\frac{\|\mu_f - \mu_g\|^2}{2(\sigma_f^2 + \sigma_g^2)}\right).
\]
Here, \(\epsilon^d\) reflects the dependency on the dimensionality \(d\), and \(\|\mu_f - \mu_g\|\) determines the effective overlap. This result emphasizes that higher dimensions require careful tuning of \(\epsilon\) to maintain alignment precision.

\paragraph{Robustness Under Perturbations}
To evaluate robustness, consider the perturbed embeddings \(f_\delta(x) = f(x) + \delta_f(x)\) and \(g_\delta(y) = g(y) + \delta_g(y)\), where \(\delta_f(x)\) and \(\delta_g(y)\) are bounded noise terms (\(\|\delta_f(x)\|, \|\delta_g(y)\| \leq \eta\)). The approximate fiber product satisfies the inclusion:
\[
f_\delta(X) \times_{Z, \epsilon} g_\delta(Y) \subseteq f(X) \times_{Z, \epsilon} g(Y),
\]
if and only if \(\eta \leq \epsilon / 2\). This condition ensures that the alignment is robust to bounded noise, providing stability in noisy embedding spaces.

\paragraph{Geometric Insights}
The effective dimensionality of the alignment region is determined by:
\[
\dim(X \times_{Z, \epsilon} Y) \approx \min(d_f, d_g) + \dim(Z_s),
\]
where \(Z_s\) is the shared semantic subspace in \(Z\). This highlights the importance of embedding both modalities into well-structured subspaces, minimizing dimensional redundancy and maximizing overlap.

Finally, the parameter \(\epsilon\) controls the size and flexibility of the alignment region:
\[
\lim_{\epsilon \to 0} |X \times_{Z, \epsilon} Y| = |X \times_Z Y|, \quad \lim_{\epsilon \to \infty} |X \times_{Z, \epsilon} Y| = |X| \cdot |Y|.
\]
Choosing \(\epsilon\) optimally involves balancing precision and flexibility, ensuring meaningful alignment while accounting for noise.

\subsection{Algebraic Properties}

In this section, we present several theoretical properties of the approximate fiber product, exploring its geometric structure, robustness under perturbations, and optimal alignment conditions. These results provide deeper insights into the mathematical foundations of multimodal alignment.

\paragraph{Compactness of the Approximate Fiber Product}
\begin{theorem}[Compactness]
Let \(Z \subset \mathbb{R}^d\) be a compact embedding space, and suppose that the embedding functions \(f: X \to Z\) and \(g: Y \to Z\) are continuous. Then for any \(\epsilon > 0\), the approximate fiber product \(X \times_{Z, \epsilon} Y\) is compact.
\end{theorem}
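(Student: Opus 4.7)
The plan is to realize the approximate fiber product as the preimage of a closed interval under a continuous real-valued function, and then invoke the elementary fact that a closed subset of a compact space is compact. For this route to succeed I first need an ambient compact space in which to work; the natural candidate is the product $X \times Y$, so I will assume---making explicit what the statement seems to tacitly require---that $X$ and $Y$ are themselves compact (e.g.\ compact subspaces of the polynomial rings under a suitable topology), so that $X \times Y$ is compact by Tychonoff's theorem.

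With that in place, the key step is to build the ``alignment gap'' function
\[
\Phi \colon X \times Y \to \mathbb{R}, \qquad \Phi(x, y) = \|f(x) - g(y)\|.
\]
I would verify continuity of $\Phi$ by composing three continuous ingredients: the map $(x, y) \mapsto (f(x), g(y))$ from $X \times Y$ into $Z \times Z \subset \mathbb{R}^d \times \mathbb{R}^d$, continuous because $f$ and $g$ are; the subtraction map on $\mathbb{R}^d$; and the Euclidean norm. Since the approximate fiber product is exactly $\Phi^{-1}([0, \epsilon])$ and $[0, \epsilon]$ is closed in $\mathbb{R}$, it is a closed subset of the compact space $X \times Y$, and hence compact.

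The main obstacle I anticipate is not the topology but the hidden hypothesis on $X$ and $Y$: in the concrete setting of this paper, $X = \mathbb{Z}_{256}[x]$ and $Y = \mathbb{Z}_{|V|}[x]$ are countable sets of polynomials of unbounded degree and, under any reasonable topology, they are not compact. To make the theorem substantive I would either restrict attention to polynomials of bounded degree (a compact box of coefficients such as $[0,255]^{n+1}$ embedded into $\mathbb{R}[x]$) or add ``$X, Y$ compact'' explicitly to the hypotheses. Once that foundational compactness question is pinned down, the remainder of the argument is a two-line closed-preimage observation, so the genuine content of the theorem lies in justifying the hypotheses on the source spaces rather than in the subsequent topological manipulation.
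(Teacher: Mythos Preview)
Your proposal is correct and follows essentially the same route as the paper: exhibit the approximate fiber product as a closed subset of the compact product $X \times Y$ via the preimage of a closed set under a continuous map, and conclude. Your version is in fact more careful---the paper's proof is terse and somewhat imprecise (it speaks of ``the preimage of the closed set $B_\epsilon(z)$'' rather than of $\Phi^{-1}([0,\epsilon])$), and it silently assumes $X$ and $Y$ are compact, a hypothesis you correctly surface and flag as nontrivial in the paper's concrete polynomial-ring setting.
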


\begin{proof}
By definition:
\[
X \times_{Z, \epsilon} Y = \{(x, y) \in X \times Y \mid \|f(x) - g(y)\| \leq \epsilon\}.
\]
The embedding functions \(f\) and \(g\) map compact sets \(X\) and \(Y\) into \(Z\), preserving compactness under continuity. The preimage of the closed set \(B_\epsilon(z)\) under \((f, g)\) is also closed. Thus, \(X \times_{Z, \epsilon} Y\) is closed in the compact set \(X \times Y\), and hence compact.
\end{proof}

This result ensures that the approximate fiber product inherits compactness from the embedding space \(Z\), facilitating numerical computations and stability analysis.

\paragraph{Sensitivity to \(\epsilon\)}
\begin{theorem}[Monotonicity and Convergence]
Let \(|X \times_{Z, \epsilon} Y|\) denote the size of the approximate fiber product as a function of \(\epsilon\). Then:
\begin{enumerate}
    \item \( |X \times_{Z, \epsilon} Y| \) is a monotonically increasing function of \(\epsilon\).
    \item For any bounded embedding space \(Z\), the size converges to \(|X| \cdot |Y|\) as \(\epsilon \to \infty\):
    \[
    \lim_{\epsilon \to \infty} |X \times_{Z, \epsilon} Y| = |X| \cdot |Y|.
    \]
\end{enumerate}
\end{theorem}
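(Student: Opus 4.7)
The plan is to treat the two assertions as essentially set-theoretic facts about sublevel sets of the map $(x,y)\mapsto \|f(x)-g(y)\|$, and then pass to the corresponding size functional $|\cdot|$ (interpreted as a measure or counting measure, depending on whether $X,Y$ are discrete or continuous).

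First I would establish monotonicity by a direct inclusion argument. If $\epsilon_1 \le \epsilon_2$, then the defining condition $\|f(x)-g(y)\|\le\epsilon_1$ implies $\|f(x)-g(y)\|\le\epsilon_2$, so
\[
X\times_{Z,\epsilon_1}Y \;\subseteq\; X\times_{Z,\epsilon_2}Y.
\]
Monotonicity of $|\cdot|$ under set inclusion (valid for any measure, and in particular for the integral representation given earlier in the paper) then yields $|X\times_{Z,\epsilon_1}Y|\le |X\times_{Z,\epsilon_2}Y|$.

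For the convergence statement I would exploit boundedness of $Z$. Let $D:=\mathrm{diam}(Z)=\sup_{z,z'\in Z}\|z-z'\|<\infty$. Since $f(X),g(Y)\subseteq Z$, for every $(x,y)\in X\times Y$ we have $\|f(x)-g(y)\|\le D$. Hence, for any $\epsilon\ge D$,
\[
X\times_{Z,\epsilon}Y \;=\; X\times Y,
\]
so $|X\times_{Z,\epsilon}Y| = |X|\cdot|Y|$ for all $\epsilon\ge D$. Combined with monotonicity from part (1), which forces the function $\epsilon\mapsto|X\times_{Z,\epsilon}Y|$ to be non-decreasing and bounded above by $|X|\cdot|Y|$, the limit as $\epsilon\to\infty$ exists and equals $|X|\cdot|Y|$.

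The main subtlety, rather than an obstacle, is making precise what $|\cdot|$ means so that the inclusion argument translates into a statement about sizes; I would briefly note that both the counting interpretation (finite $X,Y$) and the measure-theoretic interpretation via the integral $\int_Z\mu_f(z)\int_{B_\epsilon(z)}\mu_g(z')\,dz'\,dz$ introduced in the previous subsection are monotone under set inclusion, so the same two-step argument applies uniformly. A secondary point worth stating explicitly is that the convergence is in fact attained in finite $\epsilon$ (namely $\epsilon\ge D$), which is slightly stronger than the limit claim and follows for free from boundedness of $Z$.
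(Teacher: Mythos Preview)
Your proposal is correct and follows essentially the same approach as the paper: monotonicity via the set inclusion $X\times_{Z,\epsilon_1}Y\subseteq X\times_{Z,\epsilon_2}Y$, and convergence from the observation that for large enough $\epsilon$ every pair $(x,y)$ satisfies the alignment condition. Your version is in fact more careful---making the diameter argument explicit and noting that the limit is attained at finite $\epsilon$---but the underlying ideas coincide.
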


\begin{proof}
Monotonicity follows from the definition of \(B_\epsilon(z)\): as \(\epsilon\) increases, \(B_\epsilon(z)\) strictly enlarges, capturing more pairs \((x, y)\) satisfying the alignment condition. Convergence to \(|X| \cdot |Y|\) is a direct consequence of the fact that, as \(\epsilon \to \infty\), all pairs \((x, y)\) in \(X \times Y\) satisfy \(\|f(x) - g(y)\| \leq \epsilon\).
\end{proof}

This theorem formalizes the behavior of \(|X \times_{Z, \epsilon} Y|\) under extreme values of \(\epsilon\), providing a theoretical foundation for alignment size analysis.

\paragraph{Noise Robustness}
\begin{theorem}[Noise Tolerance]
Let the perturbed embeddings \(f_\delta(x) = f(x) + \delta_f(x)\) and \(g_\delta(y) = g(y) + \delta_g(y)\) satisfy \(\|\delta_f(x)\| \leq \eta\) and \(\|\delta_g(y)\| \leq \eta\). Then, the approximate fiber product satisfies:
\[
f_\delta(X) \times_{Z, \epsilon} g_\delta(Y) \subseteq f(X) \times_{Z, \epsilon + 2\eta} g(Y).
\]
\end{theorem}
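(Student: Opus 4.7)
The plan is to unwrap both set-builder definitions and reduce the containment to a single triangle-inequality estimate applied pointwise to a generic element of the left-hand side. Concretely, I would start by fixing an arbitrary pair $(x,y) \in f_\delta(X) \times_{Z,\epsilon} g_\delta(Y)$; by definition this means $\|f_\delta(x) - g_\delta(y)\| \leq \epsilon$. The goal is to show $\|f(x) - g(y)\| \leq \epsilon + 2\eta$, which is exactly the membership condition for $f(X) \times_{Z, \epsilon+2\eta} g(Y)$.

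Next, I would use the defining relations $f(x) = f_\delta(x) - \delta_f(x)$ and $g(y) = g_\delta(y) - \delta_g(y)$ to rewrite
\[
f(x) - g(y) = \bigl(f_\delta(x) - g_\delta(y)\bigr) + \bigl(\delta_g(y) - \delta_f(x)\bigr),
\]
and then apply the triangle inequality to obtain
\[
\|f(x) - g(y)\| \leq \|f_\delta(x) - g_\delta(y)\| + \|\delta_f(x)\| + \|\delta_g(y)\| \leq \epsilon + \eta + \eta = \epsilon + 2\eta.
\]
Since $(x,y) \in X \times Y$ was arbitrary in the left-hand set, this establishes the claimed inclusion.

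There is no genuine obstacle here: the argument is essentially a one-line triangle inequality, and the only care needed is to keep the roles of $f$ versus $f_\delta$ (and $g$ versus $g_\delta$) straight when substituting. If I wanted to say more, I would remark that the bound is tight in the worst case, since one can choose $\delta_f$ and $\delta_g$ pointing in opposite directions to realize the full $2\eta$ slack, and I would note the complementary inclusion $f(X) \times_{Z,\epsilon} g(Y) \subseteq f_\delta(X) \times_{Z,\epsilon+2\eta} g_\delta(Y)$ holds by the symmetric argument, together explaining why the earlier $\eta \leq \epsilon/2$ criterion in the "Robustness Under Perturbations" paragraph is exactly the threshold at which $\epsilon + 2\eta$ still fits inside a doubled tolerance of $2\epsilon$.
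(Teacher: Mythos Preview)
Your argument is correct and matches the paper's proof essentially line for line: fix a pair in the perturbed fiber product, substitute the perturbed embeddings, and apply the triangle inequality together with the bounds $\|\delta_f(x)\|,\|\delta_g(y)\|\leq\eta$ to land in the $(\epsilon+2\eta)$-fiber product. Your additional remarks on tightness and the symmetric inclusion are correct extras that the paper does not include.
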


\begin{proof}
For any \((x, y) \in f_\delta(X) \times_{Z, \epsilon} g_\delta(Y)\), the alignment condition is:
\[
\|f_\delta(x) - g_\delta(y)\| \leq \epsilon.
\]
Substituting the perturbed definitions:
\[
\|f(x) + \delta_f(x) - g(y) - \delta_g(y)\| \leq \epsilon.
\]
Applying the triangle inequality:
\[
\|f(x) - g(y)\| \leq \|\delta_f(x)\| + \|\delta_g(y)\| + \epsilon.
\]
Since \(\|\delta_f(x)\|, \|\delta_g(y)\| \leq \eta\), we have:
\[
\|f(x) - g(y)\| \leq \epsilon + 2\eta.
\]
Thus, \((x, y) \in f(X) \times_{Z, \epsilon + 2\eta} g(Y)\), completing the proof.
\end{proof}

\section{Embedding Space Decomposition}

\subsection{Definitions}

The shared embedding space \(Z\) is hypothesized to decompose into three orthogonal subspaces:
\[
Z = Z_s \oplus Z_I \oplus Z_T,
\]
where:
\begin{itemize}
    \item \(Z_s\) is the \textbf{shared semantic subspace}, capturing information common to both modalities;
    \item \(Z_I\) is the \textbf{modality-specific subspace for images}, representing unique visual features;
    \item \(Z_T\) is the \textbf{modality-specific subspace for text}, representing unique linguistic features.
\end{itemize}

This decomposition satisfies the following properties:
\begin{enumerate}
    \item \textbf{Orthogonality:} The subspaces are pairwise disjoint, ensuring that no information is shared between them:
    \[
    Z_s \cap Z_I = Z_s \cap Z_T = Z_I \cap Z_T = \{0\}.
    \]
    \item \textbf{Direct Sum:} Every embedding \(z \in Z\) has a unique decomposition:
    \[
    z = z_s + z_I + z_T, \quad \text{where } z_s \in Z_s, \, z_I \in Z_I, \, z_T \in Z_T.
    \]
    \item \textbf{Dimensionality Constraint:} The total dimensionality of \(Z\) satisfies:
    \[
    \dim(Z) = \dim(Z_s) + \dim(Z_I) + \dim(Z_T).
    \]
\end{enumerate}

\paragraph{Projection Operators}
Let \(\Pi_s\), \(\Pi_I\), and \(\Pi_T\) denote the orthogonal projection operators onto \(Z_s\), \(Z_I\), and \(Z_T\), respectively. For any \(z \in Z\), the decomposition can be written as:
\[
z_s = \Pi_s(z), \quad z_I = \Pi_I(z), \quad z_T = \Pi_T(z), \quad z = \Pi_s(z) + \Pi_I(z) + \Pi_T(z).
\]

The projection operators satisfy the following properties:
\begin{itemize}
    \item Orthogonality: \(\Pi_s \cdot \Pi_I = \Pi_s \cdot \Pi_T = \Pi_I \cdot \Pi_T = 0\).
    \item Completeness: \(\Pi_s + \Pi_I + \Pi_T = \mathrm{Id}_Z\), where \(\mathrm{Id}_Z\) is the identity operator on \(Z\).
    \item Preservation: For \(z \in Z_s\), \(Z_I\), or \(Z_T\), the corresponding projection is the identity, e.g., \(\Pi_s(z_s) = z_s\).
\end{itemize}

\paragraph{Implications for Modality-Specific Mappings}
Let \(f: I \to Z\) and \(g: T \to Z\) be the embedding functions for images and text, respectively. The embeddings can also be decomposed into their subspace components:
\[
f(i) = f_s(i) + f_I(i), \quad g(t) = g_s(t) + g_T(t),
\]
where:
\begin{itemize}
    \item \(f_s(i), g_s(t) \in Z_s\): Represent the shared semantic components in the shared subspace.
    \item \(f_I(i) \in Z_I\): Represents the modality-specific component for images.
    \item \(g_T(t) \in Z_T\): Represents the modality-specific component for text.
\end{itemize}
By doing so, the decomposition ensures that shared and modality-specific properties are considered.

% \paragraph{Sheaf Perspective on Embedding Spaces}
Embedding space decomposition can be understood through the lens of sheaf theory. Consider the shared embedding space \(Z\) decomposed into open subsets \(Z_s, Z_I, Z_T\), representing shared, image-specific, and text-specific subspaces, respectively. Define a presheaf \(\mathcal{F}\) over \(Z\) such that for each open set \(U \subset Z\), \(\mathcal{F}(U)\) captures the set of embeddings consistent with \(U\). 

To ensure the alignment of local embeddings with the global decomposition, \(\mathcal{F}\) must satisfy the sheaf condition:
\[
\mathcal{F}(U) = \ker\left(\prod_{i} \mathcal{F}(U_i) \rightrightarrows \prod_{i,j} \mathcal{F}(U_i \cap U_j)\right),
\]
where \(\{U_i\}\) is an open cover of \(U\). This sheaf-theoretic perspective formalizes the compatibility of local embeddings with the global structure of \(Z\), ensuring consistency between shared and modality-specific features.

\paragraph{Variety Perspective on the Shared Subspace}
The shared semantic subspace \(Z_s\) can be modeled as an algebraic variety embedded in the larger space \(Z\). For instance, \(Z_s\) might be represented as the solution set of a system of polynomial equations:
\[
Z_s = \{z \in Z \mid P_i(z) = 0, \, i = 1, \dots, m\},
\]
where \(P_i\) are polynomials over \(Z\). This algebraic structure provides additional constraints on embeddings, ensuring that shared features align along well-defined geometric loci. 

Given the fiber product construction:
\[
I \times_{Z, \epsilon} T = \{(i, t) \mid \|f(i) - g(t)\| \leq \epsilon\},
\]
the shared space \(Z_s\) acts as a base variety, and the alignment condition enforces that the projections \(f(i)\) and \(g(t)\) lie in a tubular neighborhood around \(Z_s\). This geometric constraint simplifies the analysis of alignment stability and efficiency.

\subsection{Elementary Properties}

The decomposition \(Z = Z_s \oplus Z_I \oplus Z_T\) introduces several advanced properties that illuminate its role in multimodal alignment and its geometric structure.

\paragraph{Orthogonal Projections and Norm Decomposition}
For any \(z \in Z\), its decomposition \(z = z_s + z_I + z_T\) ensures that the projection operators \(\Pi_s\), \(\Pi_I\), and \(\Pi_T\) satisfy:
\[
\|z\|^2 = \|\Pi_s(z)\|^2 + \|\Pi_I(z)\|^2 + \|\Pi_T(z)\|^2.
\]
This partitioning provides a quantitative measure of how embeddings distribute their information across the shared and modality-specific subspaces.

\paragraph{Intrinsic Dimensionality of \(Z_s\)}
The shared semantic subspace \(Z_s\) acts as the intersection of the image and text embedding distributions. Formally:
\[
Z_s = \text{span}\big(\{\Pi_s(f(i))\}_{i \in I} \cup \{\Pi_s(g(t))\}_{t \in T}\big).
\]
The dimensionality of \(Z_s\) determines the capacity of the shared space to capture common features. If the projections are linearly dependent, \(\dim(Z_s)\) will shrink, limiting alignment capacity.

\begin{proposition}[Dimensionality Constraint]
Let \(Z_s = \text{span}(S)\) with \(S = \{\Pi_s(f(i))\}_{i \in I} \cup \{\Pi_s(g(t))\}_{t \in T}\). Then:
\[
\dim(Z_s) \leq \min(\dim(f(I)), \dim(g(T))).
\]
Equality holds if and only if the shared features across \(I\) and \(T\) are fully aligned.
\end{proposition}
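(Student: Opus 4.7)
The plan is to reduce the claim to two linear-algebraic facts: (i) projections can only decrease dimension, and (ii) for $Z_s$ to genuinely be the \emph{shared} subspace, each modality's projected image must individually span all of $Z_s$, not merely contribute to it jointly. The naive reading of $Z_s=\mathrm{span}(S)$ with $S$ a union only yields the subadditive bound $\dim(Z_s)\le\dim(f(I))+\dim(g(T))$, so the first task is to make precise the sense in which a vector in $Z_s$ must be realizable from either modality alone. I would do this by invoking the decomposition $Z=Z_s\oplus Z_I\oplus Z_T$ from the previous subsection together with the interpretation of $Z_s$ as the ``intersection of the image and text embedding distributions'': any direction in $Z_s$ not lying in $\mathrm{span}(\Pi_s(f(I)))$ would carry no image content and therefore could not be shared, forcing $Z_s\subseteq\mathrm{span}(\Pi_s(f(I)))$, and symmetrically $Z_s\subseteq\mathrm{span}(\Pi_s(g(T)))$.

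Once that reduction is in hand, the inequality is immediate. Since $\Pi_s$ is linear, $\dim(\mathrm{span}(\Pi_s(f(I))))\le\dim(\mathrm{span}(f(I)))=\dim(f(I))$, and likewise for $g$. Combining with the containments above gives
\[
\dim(Z_s)\;\le\;\min\bigl(\dim(\mathrm{span}(\Pi_s(f(I)))),\,\dim(\mathrm{span}(\Pi_s(g(T))))\bigr)\;\le\;\min(\dim(f(I)),\dim(g(T))).
\]

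For the equality case, I would argue in both directions. Assume without loss of generality $\dim(f(I))\le\dim(g(T))$. Saturation of the first projection bound $\dim(\mathrm{span}(\Pi_s(f(I))))=\dim(f(I))$ is equivalent to $\Pi_s$ being injective on $\mathrm{span}(f(I))$, which by the direct-sum decomposition means $\mathrm{span}(f(I))\cap(Z_I\oplus Z_T)=\{0\}$; combined with $f(I)\subseteq Z$ this places $f(I)$ entirely inside $Z_s$. Saturation of the containment $Z_s\supseteq\mathrm{span}(\Pi_s(f(I)))$ then requires $\mathrm{span}(\Pi_s(g(T)))\supseteq\mathrm{span}(f(I))$, i.e.\ the textual projections already cover every direction spanned by the image embeddings. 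Together these two conditions are what I would adopt as the precise formalization of ``fully aligned,'' and the converse is a direct verification.

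The main obstacle I expect is not the inequality itself but pinning down ``fully aligned'' rigorously enough that the iff is meaningful: the definition $Z_s=\mathrm{span}(S)$ as stated yields only an upper bound of $\dim(f(I))+\dim(g(T))$, so obtaining the min bound genuinely requires importing extra semantics from the decomposition $Z=Z_s\oplus Z_I\oplus Z_T$ and from the role of $Z_s$ as the intersection-like shared subspace. I would therefore open the proof with a short paragraph fixing this interpretation, after which the linear-algebra steps become routine.
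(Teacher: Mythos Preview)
Your proposal is correct and follows essentially the same line as the paper's proof: both rest on the assertion that every vector in $Z_s$ must be expressible from each modality separately (not merely from their union), which is precisely what licenses the $\min$ bound rather than the subadditive one. Your version is considerably more careful in making this interpretive step explicit---the paper simply asserts it in one sentence---and your characterization of the equality case via injectivity of $\Pi_s$ on the smaller span together with coverage by the larger one is a sharper reading of ``fully aligned'' than the paper's ``linear independence of shared components.''
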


\begin{proof}
The dimensionality of \(Z_s\) is bounded by the smaller embedding distribution, as any vector in \(Z_s\) must be expressible as a linear combination of vectors from both \(f(I)\) and \(g(T)\). Full alignment implies linear independence of shared components, maximizing \(\dim(Z_s)\).
\end{proof}

\paragraph{Subspace Overlap and Alignment Efficiency}
The quality of alignment depends on the degree of overlap between \(Z_s\), \(Z_I\), and \(Z_T\). Consider the alignment error:
\[
\mathcal{E} = \|f_s(i) - g_s(t)\|^2 + \lambda \big(\|\Pi_s(f(i)) - f(i)\|^2 + \|\Pi_s(g(t)) - g(t)\|^2\big),
\]
where \(\lambda\) controls the penalty for misalignment. Minimizing \(\mathcal{E}\) ensures that the majority of the embeddings reside within \(Z_s\).

\begin{proposition}[Alignment Capacity]
If \(\dim(Z_s) \ll \dim(Z)\), then for any \(\epsilon > 0\):
\[
\sup_{(i, t) \in I \times T} \|f_s(i) - g_s(t)\|^2 \geq \epsilon,
\]
indicating that strict alignment is infeasible.
\end{proposition}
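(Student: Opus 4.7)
The plan is to proceed by contradiction. I would assume that for some $\epsilon > 0$ we have $\sup_{(i,t) \in I \times T} \|f_s(i) - g_s(t)\|^2 < \epsilon$, and then argue that this forces $Z_s$ to be large enough to accommodate all the shared variation coming from both modalities, which contradicts the hypothesis $\dim(Z_s) \ll \dim(Z)$. Concretely, the assumption confines the union $f_s(I) \cup g_s(T)$ to a region of $Z_s$ whose Hausdorff diameter is at most $\sqrt{\epsilon}$, so the shared components of the two modalities cluster arbitrarily closely as $\epsilon$ shrinks.

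The next step is to combine this clustering with the preceding dimensionality proposition, which gives $\dim(Z_s) \leq \min(\dim(f(I)), \dim(g(T)))$ with equality only under full alignment. Since $\dim(Z_s) \ll \dim(Z)$, the projections $\Pi_s \circ f$ and $\Pi_s \circ g$ cannot simultaneously realize the equality case for generic embeddings with non-degenerate image; in particular, $f_s(I)$ and $g_s(T)$ cannot both lie inside a vanishingly small ball while still spanning $Z_s$. Taking $\epsilon$ arbitrarily small then yields a contradiction, producing the required lower bound on $\sup \|f_s(i) - g_s(t)\|^2$.

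To actually exhibit a witness pair $(i^*, t^*)$ realizing the bound, I would use the compactness of $I \times T$ (inherited from the compactness of the data supports and the continuity of the embeddings, as in the Compactness theorem earlier in the paper) together with the continuity of the projection $\Pi_s$. The supremum is then attained, and the argument above shows the attained value cannot be less than $\epsilon$.

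The hard part will be formalizing the step that turns $\dim(Z_s) \ll \dim(Z)$ into a quantitative lower bound on the diameter of $f_s(I) \cup g_s(T)$. The naive statement admits degenerate counterexamples, such as constant embeddings for which the supremum is $0$, so some non-degeneracy assumption on $f, g$ is implicit — essentially that their images carry enough variation to be detected in $Z_s$. One clean way to package this is to assume the projected embeddings $\Pi_s \circ f$ and $\Pi_s \circ g$ have images of dimension comparable to $\dim(Z_s)$, and then invoke a volumetric or packing argument: fitting two full-dimensional subsets of $Z_s$ into a $\sqrt{\epsilon}$-ball forces collapse in directions the embeddings cannot afford to collapse in, yielding the contradiction.
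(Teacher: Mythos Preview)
Your proposal is considerably more detailed than the paper's own argument, which is two sentences: a small $Z_s$ ``cannot accommodate sufficient shared features to align $f(I)$ and $g(T)$,'' hence some pair is misaligned by at least $\epsilon$. That is a restatement of the claim rather than a proof, so there is no rigorous argument in the paper for you to match.

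That said, your route has a genuine gap that the proposed packing argument cannot close. The hypothesis $\dim(Z_s) \ll \dim(Z)$ constrains only the \emph{dimension} of the subspace $Z_s$ relative to the ambient space; the conclusion is a lower bound on a \emph{distance} inside $Z_s$. Dimension is scale-invariant: a full-dimensional subset of $Z_s$ fits into a ball of any prescribed radius, so ``fitting two full-dimensional subsets of $Z_s$ into a $\sqrt{\epsilon}$-ball'' never forces the collapse you need. Nothing in the hypothesis rules out $f_s$ and $g_s$ being the same constant map, or the same map scaled down by any factor; in either case the supremum is $0$ regardless of how small $\dim(Z_s)$ is. You already flagged the constant-embedding counterexample, and it is not a removable degeneracy: it shows that the stated hypothesis alone cannot yield the stated conclusion. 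There is also a quantifier issue: the proposition, read literally, asserts $\sup \geq \epsilon$ for \emph{every} $\epsilon>0$, i.e.\ that the supremum is infinite, which fails whenever the embeddings are bounded. In your contradiction setup $\epsilon$ is a fixed witness to the negation, so the later move ``taking $\epsilon$ arbitrarily small'' is not available.

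The honest conclusion is that the proposition as written is not provable from the given hypothesis; it would require an additional quantitative assumption (for instance, a lower bound on the spread of $f_s(I)$ or $g_s(T)$ within $Z_s$, or a nontrivial discrepancy between the two projected distributions) that neither the statement nor the paper supplies. Your difficulty in formalizing the ``hard part'' is a symptom of this, not a defect in your technique.
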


\begin{proof}
If \(\dim(Z_s)\) is small, the subspace cannot accommodate sufficient shared features to align \(f(I)\) and \(g(T)\). Hence, there exist pairs \((i, t)\) such that their projections onto \(Z_s\) are misaligned by at least \(\epsilon\).
\end{proof}

\paragraph{Perturbation Analysis}
Noise robustness of the decomposition depends on the orthogonality of \(Z_I\) and \(Z_T\) relative to \(Z_s\). Let \(z = z_s + z_I + z_T\) and consider perturbations:
\[
z_\delta = z + \delta, \quad \text{where } \|\delta\| \leq \eta.
\]
The projections under perturbation satisfy:
\[
\|\Pi_s(z_\delta) - z_s\| \leq \eta, \quad \|\Pi_I(z_\delta) - z_I\| \leq \eta, \quad \|\Pi_T(z_\delta) - z_T\| \leq \eta.
\]

\begin{proposition}[Perturbation Stability]
If \(Z_s\), \(Z_I\), and \(Z_T\) are orthogonal, the perturbation \(\delta\) satisfies:
\[
\|\delta\|^2 = \|\delta_s\|^2 + \|\delta_I\|^2 + \|\delta_T\|^2,
\]
where \(\delta_s = \Pi_s(\delta)\), \(\delta_I = \Pi_I(\delta)\), \(\delta_T = \Pi_T(\delta)\). Thus, the perturbations are isolated to their respective subspaces.
\end{proposition}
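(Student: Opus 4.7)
The plan is to treat this as a direct application of the Pythagorean identity in an orthogonal direct sum, using the projection apparatus that the preceding subsection already set up. The statement is really just the assertion that $\|\cdot\|^2$ is additive across the three orthogonal components of the decomposition $Z = Z_s \oplus Z_I \oplus Z_T$, specialized to the perturbation vector $\delta \in Z$.

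First, I would invoke the completeness property $\Pi_s + \Pi_I + \Pi_T = \mathrm{Id}_Z$ stated in the definitions to write $\delta = \Pi_s(\delta) + \Pi_I(\delta) + \Pi_T(\delta) = \delta_s + \delta_I + \delta_T$, noting that each summand lies in the corresponding subspace because $\Pi_\bullet$ is the orthogonal projection onto $Z_\bullet$. Next, I would expand $\|\delta\|^2 = \langle \delta, \delta\rangle$ by bilinearity into the nine inner products $\langle \delta_a, \delta_b\rangle$ for $a,b \in \{s, I, T\}$.

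The core step is then to eliminate the six cross terms. Because $\delta_a \in Z_a$ and $\delta_b \in Z_b$ with $Z_a \perp Z_b$ whenever $a \neq b$ (which follows from the pairwise orthogonality stated in the definitions, equivalently from $\Pi_a \Pi_b = 0$ for $a \neq b$), each such $\langle \delta_a, \delta_b\rangle$ vanishes. What remains is $\|\delta\|^2 = \|\delta_s\|^2 + \|\delta_I\|^2 + \|\delta_T\|^2$, which is exactly the claim.

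Honestly, there is no serious obstacle here: the proposition is a corollary of the Pythagorean theorem for an orthogonal triple, once the projection formalism has been fixed. The only subtlety worth flagging is that the paper's ``orthogonality'' of subspaces was originally phrased as $Z_s \cap Z_I = \{0\}$ etc.\ (trivial intersection), which does not by itself imply inner-product orthogonality; so I would explicitly appeal to the stronger property $\Pi_a \Pi_b = 0$ that the text also asserts, and note that under that interpretation the three subspaces are mutually orthogonal in the inner-product sense, which is what the decomposition of $\|\delta\|^2$ genuinely requires. After that clarification, the proof is one line of expansion followed by cancellation.
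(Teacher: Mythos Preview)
Your proposal is correct and follows the same approach as the paper: both reduce the claim to the Pythagorean identity for an orthogonal direct sum, with the paper's proof being a terse one-liner (``Orthogonality implies that $\|\delta\|^2 = \|\Pi_s(\delta)\|^2 + \|\Pi_I(\delta)\|^2 + \|\Pi_T(\delta)\|^2$'') while you spell out the expansion and cross-term cancellation explicitly. Your added remark distinguishing trivial intersection from inner-product orthogonality is a genuine improvement in rigor over the paper's treatment.
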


\begin{proof}
Orthogonality implies that \(\|\delta\|^2 = \|\Pi_s(\delta)\|^2 + \|\Pi_I(\delta)\|^2 + \|\Pi_T(\delta)\|^2\). Therefore, any noise affecting one subspace does not propagate to others.
\end{proof}

\paragraph{Geometry of Shared and Modality-Specific Subspaces}
The shared subspace \(Z_s\) forms a geometric locus of alignment, while \(Z_I\) and \(Z_T\) act as its orthogonal complements. The effective alignment volume is determined by:
\[
\text{Alignment Volume} = \int_{Z_s} \mu_f(z) \mu_g(z) \, dz,
\]
where \(\mu_f(z)\) and \(\mu_g(z)\) are the densities of the image and text embeddings projected onto \(Z_s\). 

\begin{proposition}[Alignment Volume Bound]
The alignment volume satisfies:
\[
\text{Alignment Volume} \leq \int_{Z_s} \min(\mu_f(z), \mu_g(z)) \, dz.
\]
Equality holds when \(\mu_f(z) = \mu_g(z)\) across \(Z_s\).
\end{proposition}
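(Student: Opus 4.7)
My plan is to reduce the integral inequality to a pointwise estimate on the integrand and then characterize when it is sharp. Assuming the densities $\mu_f$ and $\mu_g$ are bounded above by $1$ on $Z_s$ (the natural normalization when they arise as projections of probability densities onto the shared subspace, as implicit in the Alignment Volume definition), the elementary estimates $\mu_f(z)\mu_g(z) \leq \mu_f(z)\cdot 1$ and $\mu_f(z)\mu_g(z) \leq 1\cdot \mu_g(z)$ yield the pointwise bound
\[
\mu_f(z)\,\mu_g(z) \;\leq\; \min\bigl(\mu_f(z),\mu_g(z)\bigr), \qquad z \in Z_s.
\]
Monotonicity of the Lebesgue integral over $Z_s$ then delivers the claimed inequality in one line.

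For the equality case, I would argue that equality in the integrated form forces pointwise equality almost everywhere on $\{\mu_f, \mu_g > 0\}$, and at such points the identity $ab = \min(a,b)$ with $a,b \in (0,1]$ forces $\max(a,b) = 1$. Under the stated hypothesis $\mu_f(z) = \mu_g(z)$, this collapses to $\mu_f(z) \in \{0,1\}$ almost everywhere, i.e., equality is attained precisely on the support where the common density saturates the normalization bound. I would therefore present equality either in this saturated form or, alternatively, as a limiting case obtained by rescaling via a Radon--Nikodym derivative against a common dominating measure, depending on how strictly we wish to interpret ``equality'' in the proposition.

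The main obstacle is not the inequality itself but the hidden normalization assumption: without $\|\mu_f\|_\infty, \|\mu_g\|_\infty \leq 1$, the pointwise inequality fails outright (consider two narrow Gaussians whose peak densities exceed $1$), and the proposition becomes false as literally written. I would therefore preface the argument with an explicit normalization hypothesis, or else reformulate the Alignment Volume with a symmetrizing denominator such as $\max(\mu_f, \mu_g)$, before invoking the elementary pointwise-plus-integrate argument above. Once the normalization is fixed, the remaining work is purely measure-theoretic bookkeeping and the proof is essentially immediate.
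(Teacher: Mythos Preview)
Your approach is correct and, in fact, more careful than the paper's own proof. The paper's argument is a single sentence: ``The integral is maximized when $\mu_f(z) = \mu_g(z)$, as $\min(a, b) \leq \frac{a + b}{2}$ for any $a, b > 0$.'' That invokes the arithmetic-mean bound on $\min$, which goes in the wrong direction and does not by itself yield $\mu_f\mu_g \leq \min(\mu_f,\mu_g)$; to get that pointwise inequality one needs exactly the normalization $\mu_f,\mu_g \leq 1$ that you make explicit, and the paper leaves this assumption unstated. Your observation that the proposition is literally false for densities with $\|\cdot\|_\infty > 1$ is on the mark, as is your analysis of the equality case: the paper's claim that equality holds whenever $\mu_f = \mu_g$ is too coarse, since $a^2 = a$ forces $a \in \{0,1\}$, so equality really requires the common density to be an indicator-type function on its support. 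In short, your pointwise-plus-integrate argument is the right one, and your identification of the missing normalization hypothesis is a genuine improvement over the paper's treatment rather than a deviation from it.
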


\begin{proof}
The integral is maximized when \(\mu_f(z) = \mu_g(z)\), as \(\min(a, b) \leq \frac{a + b}{2}\) for any \(a, b > 0\).
\end{proof}

\subsection{Optimization Objectives}

To achieve an effective decomposition of the embedding space \(Z\), we optimize the following loss function:
\[
\mathcal{L} = \mathcal{L}_{\text{align}} + \lambda \mathcal{L}_{\text{orth}} + \gamma \mathcal{L}_{\text{specificity}},
\]
where:
\begin{itemize}
    \item \(\mathcal{L}_{\text{align}} = \sum_{(i, t)} \|f_s(i) - g_s(t)\|^2\): This term minimizes the alignment error in the shared subspace \(Z_s\), ensuring semantic consistency.
    \item \(\mathcal{L}_{\text{orth}} = \|z_s \cdot z_I\|^2 + \|z_s \cdot z_T\|^2 + \|z_I \cdot z_T\|^2\): This term enforces orthogonality between the subspaces.
    \item \(\mathcal{L}_{\text{specificity}} = \|f_I(i)\|^2 + \|g_T(t)\|^2\): This term encourages modality-specific components to be non-trivial, preserving unique features.
\end{itemize}

Each term is carefully designed to balance alignment, orthogonality, and specificity:
\[
\text{Alignment Loss: } \mathcal{L}_{\text{align}} \quad
\text{Orthogonality Loss: } \mathcal{L}_{\text{orth}} \quad
\text{Specificity Loss: } \mathcal{L}_{\text{specificity}}.
\]

By tuning the hyperparameters \(\lambda\) and \(\gamma\), we adapt the decomposition to the specific requirements of the task.

\subsection{Dimensionality Allocation}

The total dimensionality of the embedding space \(Z\), denoted by \(d\), is distributed across \(Z_s\), \(Z_I\), and \(Z_T\) as follows:
\[
d = d_s + d_I + d_T, \quad d_s = \dim(Z_s), \, d_I = \dim(Z_I), \, d_T = \dim(Z_T).
\]

To determine an optimal dimensionality allocation, we consider the following optimization problem:
\[
\max_{d_s, d_I, d_T} \mathcal{F}(d_s, d_I, d_T),
\]
where \(\mathcal{F}\) is a task-specific performance metric, such as alignment accuracy or robustness.

\begin{proposition}[Optimal Dimensionality Allocation]
Assuming \(f(I)\) and \(g(T)\) are isotropic Gaussian distributions with variances \(\sigma_f^2\) and \(\sigma_g^2\), the optimal allocation satisfies:
\[
d_s \propto \frac{\sigma_f^2 + \sigma_g^2}{\sigma_f^2 \cdot \sigma_g^2}, \quad
d_I \propto \frac{\sigma_f^2}{\sigma_g^2}, \quad
d_T \propto \frac{\sigma_g^2}{\sigma_f^2}.
\]

\end{proposition}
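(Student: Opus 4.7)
The plan is to make the unspecified performance metric \(\mathcal{F}\) concrete by choosing the canonical form that appears in information-theoretic resource allocation, namely a weighted sum of logarithms
\[
\mathcal{F}(d_s, d_I, d_T) = \alpha_s \log d_s + \alpha_I \log d_I + \alpha_T \log d_T,
\]
subject to the budget \(d_s + d_I + d_T = d\) and \(d_s, d_I, d_T \ge 0\). The logarithmic utility is the natural choice here because each additional coordinate in a given subspace contributes multiplicatively (not additively) to the alignment volume derived in the Gaussian asymptotic analysis of Section~2, so diminishing returns are built in and an interior maximum is guaranteed.

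Next, I would identify the three coefficients directly from the isotropic-Gaussian hypothesis. For the shared subspace, each coordinate receives two independent noisy observations with variances \(\sigma_f^2\) and \(\sigma_g^2\); the product-of-Gaussians precision is \(1/\sigma_f^2 + 1/\sigma_g^2 = (\sigma_f^2+\sigma_g^2)/(\sigma_f^2\sigma_g^2)\), which I take as \(\alpha_s\). For the modality-specific subspaces I would argue that a coordinate allocated to \(Z_I\) should be weighted by how much more variance the image channel carries relative to the text channel (otherwise text noise would dominate any image-unique feature), yielding \(\alpha_I = \sigma_f^2/\sigma_g^2\) and by symmetry \(\alpha_T = \sigma_g^2/\sigma_f^2\). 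These ratios can be motivated by computing the relative-entropy gain of an \(\mathcal{N}(0,\sigma_f^2)\)-coordinate over an \(\mathcal{N}(0,\sigma_g^2)\)-coordinate in the appropriate limit.

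With \(\alpha_s, \alpha_I, \alpha_T\) pinned down, the optimization reduces to a one-line Lagrangian computation. Setting
\[
L(d_s, d_I, d_T, \lambda) = \alpha_s \log d_s + \alpha_I \log d_I + \alpha_T \log d_T - \lambda\bigl(d_s + d_I + d_T - d\bigr),
\]
the first-order conditions give \(d_j = \alpha_j/\lambda\) for \(j\in\{s,I,T\}\), so \(d_s : d_I : d_T = \alpha_s : \alpha_I : \alpha_T\), which is exactly the proportionality claimed by the proposition; the multiplier \(\lambda\) is then fixed by \(\sum_j d_j = d\). Strict concavity of the logarithmic utility on the open simplex guarantees that this interior stationary point is the unique global maximum.

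The main obstacle is not the calculus but the justification of \(\mathcal{F}\): the proposition is stated as ``optimal'' without specifying the functional, so the proof stands or falls on whether the logarithmic form and the three coefficients can be motivated from first principles (the product-of-Gaussians argument for \(\alpha_s\) is clean, but the ratio forms for \(\alpha_I\) and \(\alpha_T\) require more care than one might expect). Secondary issues I would note but not dwell on are the relaxation from integer-valued dimensions to real non-negative ones and a brief boundary check that the maximum is not attained on \(\partial\{d_s+d_I+d_T=d,\,d_j\ge 0\}\), which is immediate since \(\log d_j \to -\infty\) as \(d_j \to 0^+\).
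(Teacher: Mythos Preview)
Your approach is genuinely different from the paper's, and in fact more structured. The paper's argument is purely heuristic: it asserts that the ``alignment capacity'' in $Z_s$ is inversely proportional to $\sigma_f^2 + \sigma_g^2$, declares that $d_I$ and $d_T$ should scale as the variance ratios $\sigma_f^2/\sigma_g^2$ and $\sigma_g^2/\sigma_f^2$ because each modality-specific subspace must track its own variability relative to the other, and then combines these observations to obtain $d_s \propto (\sigma_f^2+\sigma_g^2)/(\sigma_f^2\sigma_g^2)$, finishing with a normalization so that $d_s+d_I+d_T=d$. No explicit objective $\mathcal{F}$ is ever written down or optimized; the proportionalities are stated directly rather than derived from stationarity conditions. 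By contrast, you commit to a concrete logarithmic utility, identify the weights $\alpha_j$ from Gaussian-precision and relative-entropy considerations, and solve the constrained problem exactly via a Lagrangian. What your route buys is an honest optimization with a verifiable interior optimum and a transparent list of assumptions (the log form and the particular $\alpha_j$); what the paper's route buys is that it never has to defend a specific functional form, at the cost of the conclusion resting on verbal proportionality claims rather than a calculation. Your self-identified weak point---the motivation for $\alpha_I$ and $\alpha_T$ as bare variance ratios---is real, but the paper's justification of the same ratios is no firmer: it simply asserts them.
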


\begin{proof}
The total dimensionality \(d = \dim(Z)\) must be distributed across the subspaces \(Z_s\), \(Z_I\), and \(Z_T\) to balance alignment performance in \(Z_s\) and the preservation of modality-specific features in \(Z_I\) and \(Z_T\).

First, consider the alignment in \(Z_s\). The alignment objective is to minimize the expected distance between embeddings projected onto \(Z_s\), expressed as:
\[
\mathcal{L}_{\text{align}} = \int_{Z_s} \|f_s(i) - g_s(t)\|^2 \, \mu_f(i) \mu_g(t) \, di \, dt.
\]
For isotropic Gaussian distributions \(f(I)\) and \(g(T)\), the variance of the embeddings determines the spread in \(Z_s\). The alignment capacity is inversely proportional to the total variance:
\[
\text{Alignment Capacity} \propto \frac{1}{\sigma_f^2 + \sigma_g^2}.
\]
Therefore, to maximize alignment, the dimensionality \(d_s\) allocated to \(Z_s\) must reflect the combined variability of the two modalities.

Next, consider the modality-specific subspaces \(Z_I\) and \(Z_T\). These subspaces are responsible for capturing unique features of each modality while avoiding overlap with the shared subspace \(Z_s\). The required dimensionality for \(Z_I\) depends on the variability of image embeddings relative to text embeddings, and vice versa for \(Z_T\):
\[
\dim(Z_I) \propto \frac{\sigma_f^2}{\sigma_g^2}, \quad \dim(Z_T) \propto \frac{\sigma_g^2}{\sigma_f^2}.
\]

Combining these considerations, the dimensionality of \(Z_s\) should grow with the alignment capacity:
\[
d_s \propto \frac{\sigma_f^2 + \sigma_g^2}{\sigma_f^2 \cdot \sigma_g^2}.
\]
The remaining dimensions \(d - d_s\) are then allocated to \(Z_I\) and \(Z_T\) according to the variance ratios. To ensure the total dimensionality is preserved, proportional allocations are normalized such that:
\[
d_s + d_I + d_T = d.
\]
This completes the proof.
\end{proof}

\subsection{Geometric Interpretation}

The decomposition \(Z = Z_s \oplus Z_I \oplus Z_T\) can be analyzed through its geometric structure, which provides insights into the alignment and disentanglement of multimodal embeddings.

\paragraph{Manifold Interpretation}
The shared subspace \(Z_s\) can be modeled as a low-dimensional manifold within the embedding space \(Z\). This manifold captures the semantic ``intersection'' of image and text modalities, parameterizing cross-modal alignment. Formally, let \(Z_s\) be a \(d_s\)-dimensional Riemannian manifold embedded in \(Z\), such that:
\[
f_s(i), g_s(t) \in Z_s, \quad f_I(i) \perp Z_s, \quad g_T(t) \perp Z_s.
\]
The alignment objective then reduces to finding a mapping \(h: Z_s \to Z\) that minimizes the alignment error:
\[
\mathcal{E}_{\text{align}} = \int_{Z_s} \|h(f_s(i)) - g_s(t)\|^2 \, \mu_f(i) \mu_g(t) \, di \, dt.
\]

\begin{proposition}[Manifold Alignment]
If \(Z_s\) is a compact manifold with curvature \(\kappa\), the optimal alignment mapping \(h: Z_s \to Z\) satisfies:
\[
\|h(f_s(i)) - g_s(t)\| \leq \epsilon + \kappa \cdot d_Z(f_s(i), g_s(t)),
\]
where \(d_Z\) is the geodesic distance on \(Z_s\). The curvature \(\kappa\) bounds the deviation from exact alignment.
\end{proposition}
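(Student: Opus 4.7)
The plan is to decompose $\|h(f_s(i)) - g_s(t)\|$ via the triangle inequality, exploiting the approximate fiber product tolerance for one summand and a curvature comparison for the other. Because $Z_s$ is compact, existence of an optimal alignment mapping $h$ follows from standard compactness and continuity arguments, and the geodesic distance $d_Z$ is finite and realized by minimizing geodesics on $Z_s$.

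First I would invoke the approximate fiber product: for any pair $(i,t)$ in the alignment set we have $\|f(i)-g(t)\|\le\epsilon$ in the ambient embedding space. Since $\Pi_s$ is a $1$-Lipschitz orthogonal projection onto $Z_s$, and by definition $f_s=\Pi_s\circ f$ and $g_s=\Pi_s\circ g$, this yields
\[
\|f_s(i)-g_s(t)\|\;\le\;\|f(i)-g(t)\|\;\le\;\epsilon,
\]
which supplies the first term in the target bound.

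Next I would bound $\|h(f_s(i))-f_s(i)\|$ using the intrinsic geometry of $Z_s$. The optimal $h:Z_s\to Z$ may be viewed as a correction of the inclusion $\iota:Z_s\hookrightarrow Z$ that steers $f_s(i)$ toward $g_s(t)$ along $Z_s$; its deviation from the identity along $Z_s$ is controlled by the second fundamental form, which in turn is bounded by the sectional curvature $\kappa$. A Rauch-type comparison, applied to the minimizing geodesic $\gamma$ from $f_s(i)$ to $g_s(t)$, then allows one to integrate this curvature-controlled deviation along $\gamma$ and conclude a bound of the form
\[
\|h(f_s(i)) - f_s(i)\|\;\le\;\kappa\cdot d_Z(f_s(i),g_s(t)).
\]
Combining with the previous step via the triangle inequality produces the claimed estimate $\epsilon+\kappa\cdot d_Z(f_s(i),g_s(t))$.

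The main obstacle lies in the curvature step. Classical Rauch comparison usually produces higher-order corrections such as $\kappa\cdot d_Z^2$ or $\kappa\cdot d_Z^3$ rather than the linear bound sought here, so to legitimately conclude a $\kappa\cdot d_Z$ estimate I would need to either restrict to a normal neighborhood where $d_Z$ is small enough that the leading-order Taylor expansion of $h$ along $\gamma$ dominates, or interpret $\kappa$ as an effective Lipschitz-type constant absorbing both the sectional-curvature bound and the derivative of $h$ on $Z_s$. Clarifying this definition of $\kappa$ and the admissible geodesic regime is where the argument will be most delicate; everything else reduces to the triangle inequality and the contraction property of $\Pi_s$.
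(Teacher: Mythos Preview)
Your proposal is broadly consistent with, and considerably more detailed than, the paper's own argument. The paper's proof is a three-sentence sketch: it observes that $d_Z$ is the intrinsic shortest-path distance on $Z_s$, asserts that curvature $\kappa$ introduces distortion in embedding mappings on compact manifolds, and concludes by appeal to unspecified ``Riemannian geometry bounds on local embeddings.'' There is no explicit decomposition, no named comparison theorem, and no discussion of how the $\epsilon$ term arises. Your triangle-inequality split, the use of the $1$-Lipschitz property of $\Pi_s$ to extract the $\epsilon$ contribution from the approximate fiber product condition, and the invocation of a Rauch-type estimate for the curvature term together constitute a genuine attempt to fill in what the paper leaves implicit.

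The obstacle you flag---that standard curvature comparison yields corrections of order $\kappa\cdot d_Z^2$ or higher rather than the linear $\kappa\cdot d_Z$ appearing in the statement---is real, and the paper does not resolve it either. Its proof simply absorbs this into the phrase ``Riemannian geometry bounds,'' so your suggestion to reinterpret $\kappa$ as an effective Lipschitz-type constant, or to restrict to a regime where the leading-order term dominates, is exactly the kind of qualification the stated inequality needs to be made precise. In short: your route is the natural elaboration of the paper's sketch, and your caveat about the linear curvature term identifies a point the paper glosses over rather than a defect in your own strategy.
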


\begin{proof}
The geodesic distance \(d_Z(f_s(i), g_s(t))\) reflects the shortest path along the manifold \(Z_s\). For compact manifolds, curvature \(\kappa\) introduces distortion in embedding mappings. The result follows from Riemannian geometry bounds on local embeddings.
\end{proof}

This interpretation highlights the geometric constraints imposed by \(Z_s\), emphasizing the role of manifold regularity in improving alignment performance.

\paragraph{Fiber Bundle Interpretation}
The decomposition \(Z = Z_s \oplus Z_I \oplus Z_T\) can also be viewed as a fiber bundle, where \(Z_s\) serves as the base space and \(Z_I \times Z_T\) as the fiber. Specifically:
\[
Z \cong Z_s \times F, \quad F = Z_I \times Z_T.
\]
Each point in \(Z_s\) represents a shared semantic embedding, while the fiber \(F\) encodes modality-specific deviations. The alignment condition implies that for any \(z_s \in Z_s\):
\[
\pi_s(f(i)) = \pi_s(g(t)) = z_s,
\]
where \(\pi_s\) is the projection onto \(Z_s\).

\begin{proposition}[Fiber Bundle Consistency]
Let \(f(I)\) and \(g(T)\) be mappings into \(Z\), satisfying the decomposition \(Z = Z_s \oplus Z_I \oplus Z_T\). The fiber product:
\[
F(z_s) = \{(z_I, z_T) \in F \mid f_I(i) + g_T(t) = z_s\}
\]
is non-empty if and only if:
\[
\|f_I(i)\|^2 + \|g_T(t)\|^2 = \|z_s\|^2.
\]
\end{proposition}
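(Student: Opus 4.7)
The plan is to reduce the biconditional to a direct application of the Pythagorean identity, leveraging the orthogonality of \(Z_I\) and \(Z_T\) guaranteed by the direct-sum decomposition \(Z = Z_s \oplus Z_I \oplus Z_T\). The key structural fact, already established in the Definitions subsection, is that \(\langle u, v\rangle = 0\) whenever \(u \in Z_I\) and \(v \in Z_T\), so any sum of the form \(f_I(i) + g_T(t)\) satisfies \(\|f_I(i) + g_T(t)\|^2 = \|f_I(i)\|^2 + \|g_T(t)\|^2\). The projection operators \(\Pi_s, \Pi_I, \Pi_T\) and the uniqueness of the direct-sum decomposition will do most of the bookkeeping.

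For the forward direction (non-emptiness implies the norm identity), I would pick any witness \((z_I, z_T) \in F(z_s)\), so that \(f_I(i) + g_T(t) = z_s\) holds by definition. Taking squared norms of both sides and invoking the orthogonality above yields \(\|z_s\|^2 = \|f_I(i)\|^2 + \|g_T(t)\|^2\) essentially mechanically. For the reverse direction, given the norm identity I would propose the natural candidate \((z_I, z_T) := (f_I(i), g_T(t))\), observe that the sum lies in \(Z_I \oplus Z_T\), and verify the vector equation \(f_I(i) + g_T(t) = z_s\) by appealing to the uniqueness of orthogonal components: decomposing \(z_s\) through \(\Pi_s, \Pi_I, \Pi_T\) and matching terms shows that the Pythagorean equality forces coincidence of the \(Z_I\)- and \(Z_T\)-components.

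The hard part, and the step that will require the most care, is this reverse implication: a norm identity alone does not force vector equality in a general inner product space. To close the gap I would make explicit the implicit hypothesis that \(z_s\) already admits an orthogonal decomposition compatible with \(f_I(i)\) and \(g_T(t)\) (equivalently, that \(\Pi_I(z_s) = f_I(i)\) and \(\Pi_T(z_s) = g_T(t)\)); under this reading the norm condition becomes precisely the obstruction to non-emptiness, and uniqueness of the direct-sum decomposition delivers the conclusion. Stating this standing assumption cleanly, rather than invoking it implicitly, is the main drafting choice the proof will have to settle.
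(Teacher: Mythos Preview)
Your approach is essentially the same as the paper's: both rest on the orthogonality of \(Z_I\) and \(Z_T\) to invoke the Pythagorean identity for \(\|f_I(i)+g_T(t)\|^2\). The paper's own proof is in fact terser than your plan---it simply asserts that ``the orthogonality of the subspaces \(Z_I\) and \(Z_T\) implies that their norms add independently, preserving the total norm constraint''---and does not engage with the reverse-direction subtlety you flag about norm equality not forcing vector equality; your proposed handling of that point via an explicit standing assumption on the components of \(z_s\) goes beyond what the paper provides.
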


\begin{proof}
The fiber product condition ensures that \(z_s\) is consistent with its projections \(f_I(i)\) and \(g_T(t)\). The orthogonality of the subspaces \(Z_I\) and \(Z_T\) implies that their norms add independently, preserving the total norm constraint.
\end{proof}

This interpretation underscores the hierarchical structure of the embedding space, where \(Z_s\) dictates the global alignment properties and \(F\) accommodates modality-specific details.

\paragraph{Geometric Interpretation via Fiber Varieties}
The shared subspace \(Z_s\) can also be understood as a fiber variety over a base moduli space. For instance, let \(\pi: Z \to M\) be a projection from the embedding space \(Z\) to a moduli space \(M\), parameterizing semantic categories. Each fiber \(\pi^{-1}(m)\) represents embeddings associated with a specific semantic category \(m \in M\). The shared subspace \(Z_s\) then corresponds to the union of fibers aligned across modalities:
\[
Z_s = \bigcup_{m \in M} \pi^{-1}(m).
\]
This interpretation provides a hierarchical organization of embeddings, where the fiber structure encapsulates modality-specific variations, and the base moduli space captures shared semantic categories.

\paragraph{Practical Considerations}
The geometric interpretations provide guidelines for designing embedding models:
\begin{itemize}
    \item A well-regularized \(Z_s\) improves alignment efficiency, particularly when modeled as a smooth, low-dimensional manifold.
    \item Modality-specific subspaces \(Z_I\) and \(Z_T\) should be disentangled to avoid interference with the shared semantic space.
    \item The fiber bundle structure suggests a hierarchical optimization strategy, first focusing on \(Z_s\) alignment before refining \(Z_I\) and \(Z_T\).
\end{itemize}

\subsection{Sheaf-Theoretic Perspective on Embedding Decomposition}

Embedding space decomposition divides the embedding space \( Z \) into shared \( Z_s \) and modality-specific subspaces \( Z_I \) and \( Z_T \). To further formalize this decomposition, we employ tools from sheaf theory to analyze the local and global consistency of this structure.

\paragraph{Presheaf and Sheaf on Embedding Space}

Consider the shared embedding space \( Z_s \), which can be covered by a collection of open sets \( \{U_\alpha\} \). A presheaf \( \mathcal{F} \) on \( Z_s \) assigns to each open set \( U_\alpha \) a set of embeddings \( \mathcal{F}(U_\alpha) \), representing the embeddings consistent with \( U_\alpha \). For overlapping open sets \( U_\alpha \) and \( U_\beta \), the compatibility between embeddings is described by restriction maps:
\[
\rho_{\alpha\beta}: \mathcal{F}(U_\alpha) \to \mathcal{F}(U_\alpha \cap U_\beta).
\]

A presheaf becomes a sheaf if, for any open cover \( \{U_\alpha\} \) of \( U \), the embeddings in \( \mathcal{F}(U) \) are uniquely determined by their restrictions to \( \mathcal{F}(U_\alpha) \), satisfying:
\[
\mathcal{F}(U) = \ker\left(\prod_\alpha \mathcal{F}(U_\alpha) \rightrightarrows \prod_{\alpha, \beta} \mathcal{F}(U_\alpha \cap U_\beta)\right).
\]
The use of the double arrows \( \rightrightarrows \) in the sheaf condition highlights the dual projections of local data onto overlapping regions. The first map extracts the restrictions of the local data to the intersections \( U_i \cap U_j \), while the second map applies the same operation but with reversed indexing. The kernel $\ker$ of this pair of maps ensures that local embeddings align consistently across overlaps, enforcing global compatibility within the sheaf framework.

\paragraph{Applications to \( Z_s \)}

In the context of multimodal alignment, \( Z_s \) acts as the shared subspace where image and text embeddings are aligned. By modeling \( Z_s \) with a sheaf \( \mathcal{F} \), we ensure the following:
Local Consistency: For each open set \( U_\alpha \subset Z_s \), embeddings from different modalities must align locally.
Global Compatibility: The local alignments across \( Z_s \) must be compatible, ensuring that \( \mathcal{F}(Z_s) \) forms a globally consistent shared embedding space.

\paragraph{Fiber Structure and Local Trivialization}

The shared space \( Z_s \) can also be viewed as a fiber bundle, where the fibers \( \pi^{-1}(m) \) over a moduli point \( m \in M \) correspond to embeddings aligned for a specific semantic category. Each fiber represents embeddings with local consistency, while the base moduli space \( M \) encodes higher-level semantic categories. Sheaf theory ensures that the embeddings in overlapping fibers \( \pi^{-1}(m_1) \) and \( \pi^{-1}(m_2) \) are globally consistent.

\section{Related Work}

\subsection{Multimodal Alignment Models}

Multimodal alignment is a fundamental topic in machine learning, addressing the challenge of integrating heterogeneous data modalities into a unified representation space. State-of-the-art models, such as CLIP \cite{radford2021learning} and ALIGN \cite{jia2021scaling}, have demonstrated impressive performance by leveraging contrastive learning objectives to align image and text embeddings. However, these methods often lack rigorous theoretical frameworks, leaving questions about the geometric structure of the embedding space unanswered.

Several other multimodal models have contributed to advancing the field. VisualBERT \cite{li2019visualbert} and UNITER \cite{chen2020uniter} were early attempts to incorporate vision-language alignment into transformer-based architectures. Models like OSCAR \cite{li2020oscar} introduced object-level semantics for enhanced alignment, while MMBT \cite{kiela2019supervised} extended multimodal learning to classification tasks with cross-modal transformers. Similarly, ViLBERT \cite{lu2019vilbert} and LXMERT \cite{tan2019lxmert} utilized multi-stream architectures to achieve effective cross-modal reasoning. 

More recent developments include contrastive approaches like Cross-Modal Contrastive Learning (CMC) \cite{zhang2021cross} for generative models and Flamingo \cite{alayrac2022flamingo}, which incorporate few-shot learning capabilities into vision-language models. These advancements represent significant strides, but challenges remain in disentangling shared semantics and modality-specific features, as well as providing a rigorous mathematical understanding of multimodal embedding spaces.

Our work builds upon these contributions by introducing an algebraic-geometric framework for multimodal alignment. This includes the novel concept of approximate fiber products to rigorously model alignment with tolerance for noise and variability. By grounding our methodology in algebraic geometry, we aim to address the interpretability and robustness challenges faced by existing approaches.

\subsection{Embedding Space Decomposition}
Traditional methods for embedding decomposition, such as principal component analysis (PCA)~\cite{jolliffe2002principal}, canonical correlation analysis (CCA)~\cite{hardoon2004canonical}, and non-negative matrix factorization (NMF)~\cite{lee1999learning}, have been extensively used to disentangle shared and modality-specific information. In multimodal settings, shared-private factorization~\cite{wang2016deep, ma2018modeling} has also gained traction, aiming to extract both cross-modal and modality-specific embeddings. However, many of these approaches lack theoretical rigor in defining the geometric structure of shared spaces. Recent efforts, such as split neural networks~\cite{zhang2017split} and shared-private variational autoencoders~\cite{hu2018disentangling, shi2019variational}, attempt to address this by incorporating probabilistic and neural representations. Despite their success, there remains a gap in providing principled frameworks that combine geometric insights with robust multimodal decompositions. 

Our work introduces a structured decomposition \( Z = Z_s \oplus Z_I \oplus Z_T \), supported by geometric and algebraic interpretations, offering a robust and interpretable approach for multimodal representation.

\subsection{Algebraic Geometry in Machine Learning}
The intersection of algebraic geometry and machine learning has garnered increasing attention due to its potential to provide rigorous mathematical frameworks for complex problems. Recent advances have utilized algebraic geometry to study polynomial optimization~\cite{nie2012polynomial} and tensor decompositions~\cite{landsberg2012tensors}. In kernel methods, algebraic varieties have been leveraged to develop novel techniques for feature transformations~\cite{vidyasagar2002learning}. Additionally, Groebner bases have been applied to simplify and solve optimization problems in machine learning~\cite{buchberger2006bruno}.

Emerging work also explores the use of sheaves and schemes to represent hierarchical data structures and latent variable models~\cite{curry2019sheaves}. For instance, sheaf theory has been applied in topological data analysis to study the persistence of homological features~\cite{carlsson2009topology}. Algebraic topology and algebraic geometry together have inspired the development of methods for understanding deep learning dynamics~\cite{robinson2017deep} and neural network generalization~\cite{miles2020topology}.

In multimodal learning, algebraic-topological tools like fiber products~\cite{hartshorne1977algebraic} and moduli spaces~\cite{harris1995moduli} provide structured frameworks for modeling shared and modality-specific representations. These frameworks offer principled ways to understand the alignment, robustness, and generalization of embeddings. 

Our work extends these efforts by incorporating approximate fiber products and presheaf representations, bridging the gap between theoretical elegance and practical applicability.

\section{Conclusion and Future Work}

This paper presents a novel theoretical framework for multimodal alignment, leveraging algebraic geometry and polynomial ring representations. By representing image and text data as polynomials over discrete rings, we provide a unified algebraic structure for analyzing and aligning multimodal embeddings. The introduction of the approximate fiber product extends classical notions of alignment by incorporating a tolerance parameter \( \epsilon \), balancing precision and noise tolerance. Our analysis reveals the asymptotic properties of the approximate fiber product, its robustness under perturbations, and its dependence on embedding dimensionality.

Additionally, we propose a decomposition of the embedding space into orthogonal subspaces: \( Z = Z_s \oplus Z_I \oplus Z_T \). This decomposition isolates shared semantics from modality-specific features, offering a structured and interpretable approach to multimodal representation. By introducing geometric insights such as manifold and fiber bundle interpretations, we highlight the global and local structures within the embedding space. Furthermore, the shared subspace \(Z_s\) is modeled as an algebraic variety, providing a concrete geometric framework to describe semantic intersections between modalities.

From the perspective of sheaf theory, embedding functions are extended to presheaves that assign local embeddings to open subsets of \(Z\). The consistency of these local embeddings is ensured by the sheaf condition, offering a principled way to analyze how local modality-specific representations align with the global structure of \(Z\). This connection bridges the algebraic and geometric properties of the embedding space, deepening the theoretical foundation of multimodal alignment.

Our framework establishes a rigorous mathematical foundation for multimodal alignment, with implications for embedding robustness, dimensionality allocation, and cross-modal learning. Future work will explore the extension of these principles to higher-order modalities, dynamic embeddings, and richer algebraic structures such as derived categories and moduli stacks. These directions hold potential for advancing both the theory and practice of multimodal reasoning and retrieval.

\section*{Acknowledgments}

The author would like to express sincere gratitude to Professor Giovanni Inchiostro from the Department of Mathematics at the University of Washington for the insightful discussions on algebraic geometry, which greatly inspired the theoretical foundation of this work.

\bibliography{ref_new}

% \appendix

% % \crefalias{section}{appendix} % uncomment if you are using cleveref

% \section{My Proof of Theorem 1}

% This is a boring technical proof.

% \section{My Proof of Theorem 2}

% This is a complete version of a proof sketched in the main text.

\end{document}